\title{Going Deeper With Directly-Trained Larger Spiking Neural Networks}
\author {
        Hanle Zheng \textsuperscript{\rm 1}, Yujie Wu \textsuperscript{\rm 1}, Lei Deng \textsuperscript{\rm 1,3}, Yifan Hu \textsuperscript{\rm 1} and Guoqi Li \textsuperscript{\rm 1,2 }\thanks{Corresponding author: Guoqi Li (liguoqi@tsinghua.edu.cn)}\\
}
\begin{document}
\maketitle

\begin{abstract}
Spiking neural networks (SNNs) are promising in a bio-plausible coding for spatio-temporal information and event-driven signal processing, which is very suited for energy-efficient implementation in neuromorphic hardware. However, the unique working mode of SNNs makes them more difficult to train than traditional networks. Currently, there are two main routes to explore the training of deep SNNs with high performance. The first is to convert a pre-trained ANN model to its SNN version, which usually requires a long coding window for convergence and cannot exploit the spatio-temporal features during training for solving temporal tasks. The other is to directly train SNNs in the spatio-temporal domain. But due to the binary spike activity of the firing function and the problem of gradient vanishing or explosion, current methods are restricted to shallow architectures and thereby difficult in harnessing large-scale datasets (e.g. ImageNet). To this end, we propose a threshold-dependent batch normalization (tdBN) method based on the emerging spatio-temporal backpropagation, termed ``STBP-tdBN", enabling direct training of a very deep SNN and the efficient implementation of its inference on neuromorphic hardware. With the proposed method and elaborated shortcut connection, we significantly extend directly-trained SNNs from a shallow structure ($<$10 layer) to a very deep structure (50 layers). Furthermore, we theoretically analyze the effectiveness of our method based on ``Block Dynamical Isometry" theory. Finally, we report superior accuracy results including 93.15\% on CIFAR-10, 67.8\% on DVS-CIFAR10, and 67.05\% on ImageNet with very few timesteps. To our best knowledge, it's the first time to explore the directly-trained deep SNNs with high performance on ImageNet. We believe this work shall pave the way of fully exploiting the advantages of SNNs and attract more researchers to contribute in this field.

\end{abstract}

\section{Introduction}
Inspired by human neurons' working patterns, spiking neural networks (SNNs) have been considered as a promising model in artificial intelligence and theoretical neuroscience \cite{PMID:31776490}. Benefited from intrinsic neuronal dynamics and event-driven spike communication paradigms, SNNs show great potential in continuous spatio-temporal information processing with lower energy consumption and better robustness\cite{stromatias2015robustness}. Moreover, SNNs can be easily applied on some specialized neuromorphic hardware \cite{merolla2014million,davies2018loihi}, which may be seen as the next generation of neural networks.

  There are two main approaches to train an SNN with high performance. One is to convert a pre-trained ANN to an SNN model, which usually needs hundreds of timesteps\cite{sengupta2019going,hu2018spiking}. So, even though these SNNs achieve comparable accuracy to ANNs with similar structures, the large number of timesteps causes serious signal latency and increase the amount of computation. The other is to train SNNs directly based on gradient descent method, which is independent of pre-trained ANNs and can lessen the number of timesteps. A recent work \cite{wu2018spatio}, which proposes a learning algorithm called ``Spatio-temporal backpropagation" (STBP) with an approach to training SNNs directly on ANN-oriented programming frameworks (e.g. Pytorch), provides us with a chance to explore deeper and larger directly-trained SNNs. However, SNNs trained by this algorithm are restricted to shallow architectures and cannot achieve satisfactory performance on large-scale datasets such as ImageNet. So under above algorithm, we clarify two problems to be solved for training deeper SNNs directly.

  The first problem is gradient vanishing or explosion. Because of special mechanism of spatio-temporal information processing and non-differentiable spiking signal, the gradient propagation behaves much unstable and tends to vanish in most cases when training SNNs directly, which prevents SNNs from going deeper. So far, there isn't an effective method to handle this problem well in directly-trained SNNs. Famous works \cite{lee2020enabling,lee2016training,wu2019direct} before us fail to train deep SNNs directly and all of their models are less than 10 layers, which seriously influences the performances of their methods.

   The other problem is that we need to balance the threshold and input on each neuron to get appropriate firing rates in SNNs. When the input is too small compared with the threshold, the neuron fires few spikes and the neuronal membrane potential remains so the information handled by the neuron cannot be expressed enough. When the input is too large, the neuron fires all the time and is insensitive to the change of input. For directly-trained SNNs, with binary spikes propagating layer by layer, the distribution of pre-synaptic inputs will shift during the training process, making the size of inputs inappropriate. Many methods have been proposed to deal with it, such as threshold regularization \cite{lee2016training} and NeuNorm \cite{wu2019direct}.

   Normalization seems to be appropriate methods to solve both of the problems. They stabilize the network and gradient propagation according to \cite{chen2020comprehensive}. Also, they normalize the distributions of pre-synaptic inputs to same expectation and variance, which helps to balance the threshold and input by reducing the internal covariate shift. However, the existing normalization methods aren't suitable for training of SNNs. For the additional temporal dimension and special activation mechanism, directly-trained SNNs need a specially designed normalization method .

  In this paper, we develop a new algorithm to train a deep SNN directly. The main contributions of this work are summarized as follows:
  \begin{itemize}
  \item We propose the threshold-dependent batch normalization to solve the gradient vanishing or explosion problem and adjust firing rate. Furthermore, we take the residual network structure and modify the shortcut connections which are suitable for deep SNNs.
  \item On this basis, we investigate very deep directly-trained SNNs (\textbf{extending them from less than 10 to 50 layers}) and test them over large-scale non-spiking datasets (CIFAR-10, ImageNet) and neuromorphic datasets (DVS-Gesture, DVS-CIFAR10).
  \item On CIFAR-10 and ImageNet,  we comprehensively validate different SNN architectures (ResNet-18, 34, 50) and report competitive results compared to similar SNNs with much fewer timesteps (\textbf{no more than 6 timesteps}), to our best knowledge, which is the first time that the directly-trained SNN with full spikes report fairly high accuracy on ImageNet. On neuromorphic datasets, our model achieves state-of-the-art performance on both DVS-Gesture and DVS-CIFAR10, which shows advantage of SNNs on dealing with temporal-spatial information.
  \end{itemize}
\section{Related Work}
\paragraph{Learning algorithm of SNNs}In the past few years, a lot of learning algorithms have explored how to train a deep SNN with high performance, including: (1) some  approaches to converting pre-trained ANNs to SNNs; (2) gradient descent based algorithms.

  The first one is called as the ``ANN-SNN conversion methods" \cite{sengupta2019going,han2020rmp}, seen as the most popular way to train deep SNNs with high performance, which transforms the real-valued output of ReLU function to binary spikes in the SNN. This kind of method successfully reports competitive results over large-scale datasets without serious degradation compared to ANNs. However, it ignores rich temporal dynamic behaviors of spiking neurons and usually requires hundreds or thousands of time steps to approach the accuracy of pre-trained ANNs.

  The gradient descent based algorithms train SNNs with error backpropagation. With gradient descent optimization learning algorithms, some SNN models \cite{lee2016training,jin2018hybrid,lee2020enabling} achieve high performance on CIFAR-10 and other neuromorphic datasets. Among them, \cite{wu2019direct} improves the leaky integrate-and-fire (LIF) model \cite{hunsberger2015spiking} to an iterative LIF model and develop STBP learning algorithm, which makes it friendly on ANN-oriented programming frameworks and speeds up the training process. Moreover, training SNNs directly shows great potential on dealing with spatial and temporal information and reports high accuracy within very few timesteps. However, it fails to train a very deep SNN directly because of the gradient vanishing and internal covariate shift, which is exactly what we want to conquer.
\paragraph{Gradient vanishing or explosion in the deep neural network (DNN)}
 A DNN can avoid gradient vanishing or explosion when it is dynamic isometry, which means every singular value of its input-output jacobian matrix remains close to 1. \cite{chen2020comprehensive} proposes a metric---``\textbf{Block Dynamical Isometry}", serving as a general statistical tool to all of complex serial-parallel DNN. It investigates the first and second moment of each block in the neural network and analyze their effects to the gradient distribution. Moreover, it gives theoretical explanation to the function of the weight initialization, batch normalization and shortcut connection in the neural network, which helps us to develop our algorithm.
 \paragraph{Normalization}
 Normalization techniques enable the training of well-behaved neural networks. For artificial neural networks, normalization, such as batch normalization \cite{ioffe2015batchnorm}, group normalization \cite{wu2018groupnorm}, and layer normalization \cite{layernorm}, have become common methods. Batch normalization (BN) accelerates deep networks training by reducing internal covariate shift, which enables higher learning rates and regularizes the model. While it causes high learning latency and increases computation, BN makes it possible for networks to go deeper avoiding gradient vanishing or explosion. For SNNs, researchers propose other normalization techniques, such as data-based normalization \cite{diehl2015fast}, Spike-Norm \cite{sengupta2019going} and NeuNorm \cite{wu2019direct}. These normalization methods aim to balance the input and threshold to avoid serious information loss, but they are not effective to our directly-trained deep SNNs because they still neglect the problem of gradient vanishing. We noticed the effects of BN in ANNs and the importance of input distribution in SNNs, so we modify BN to satisfy the training and inference of SNN models.
 \section{Materials and Methods}
 \subsection{Iterative LIF model}
   The iterative LIF model is first proposed by \cite{wu2019direct}, who utilizes Euler method to solve the first-order differential equation of Leaky integrate-and-fire (LIF) model and converts it to an iterative expression
   \begin{equation} u^{t} = \tau_{decay}u^{t-1}+I^{t},\end{equation}
   where $\tau_{decay}$ is a constant to describe how fast the membrane potential decays, $u^t$ is the membrane potential, $I^t$ is the pre-synaptic inputs. Let $V_{th}$ denote the given threshold. When $u^t>V_{th}$, the neuron fires a spike and $u^t$ will be reset to 0. The pre-synaptic inputs are accumulated spikes from other neurons at the last layer. So $I^{t}$ can be represented by $x^t = \sum_{j}w_jo^{t}(j)$, where $w_j$ are weights and $o^{t}(j)$ denotes binary spiking output from others at the moment of $t$.
   Taking spatial structure into consideration and set $u_{reset}=0$, the whole iterative LIF model in both spatial and temporal domain can be determined by
   \begin{align}   u^{t,n+1} &= \tau_{decay}u^{t-1,n+1}(1-o^{t-1,n+1})+x^{t,n},\\
      o^{t,n+1} &= \left\{ \begin{array}{ll} 1 & \textrm{if $u^{t,n+1}>V_{th}$},\\ 0 & \textrm{otherwise}. \end{array} \right. \end{align}
   where $u^{t,n}$ is the membrane potential of the neuron in $n$-th layer at time $t$, $o^{t,n}$ is the binary spike and $\tau_{decay}$ is the potential decay constant.

   The iterative LIF model enables forward and backward propagation to be implemented on both spatial and temporal dimensions, which makes it friendly to general machine-learning programming frameworks.
  \subsection{Threshold-dependent batch normalization}
  As a regular component of DNNs, batch normalization (BN) has been a common method for current neural networks, which allows stable convergence and much deeper neural networks. However, because of the additional temporal dimension and special activation mechanism, directly-trained SNNs need a specially-designed normalization method. That motivates us to propose the threshold-dependent batch normalization (tdBN).

  We consider a Spiking Convolution Neural Network (SCNN). Let $o^{t}$ represent spiking outputs of all neurons in a layer at timestep $t$. With convolution kernel $W$ and bias $B$, we have
  \begin{equation} x^{t} = W \circledast  o^{t} +B, \end{equation}
  where $x^{t} \in R^{N\times C\times H \times W}$ represents the pre-synapse inputs at timestep $t$ with $N$ as the batch axis, $C$ as the channel axis, $(H,W)$ as the spatial axis.

  In our tdBN, the high-dimensional pre-synaptic inputs will be normalized along the channel dimension (Fig. 1). Let $x^t_{k}$ represent $k$-th channel feature maps of $x^t$. Then $x_k = (x^1_k,x^2_k,\cdots,x^T_k)$ will be normalized by
  \begin{align} \hat x_k &= \frac{\alpha V_{th}(x_k-E[x_k])}{\sqrt{Var[x_k]+\epsilon}},\\
   y_k &= \lambda_k \hat x_k+\beta_k,  \end{align}
  where $V_{th}$ denotes the threshold, $\alpha$ is a hyper-parameter depending on network structure, $\epsilon$ is a tiny constant, $\lambda_k$ and $\beta_k$ are two trainable parameters, $E[x_k], Var[x_k]$ are the mean and variance of $x_k$ statistically estimated over the Mini-Batch. Fig. 1 displays how $ E[x_k],Var[x_k]$ to be computed, which are defined by
  \begin{align} E[x_k] &= mean(x_k),\\
   Var[x_k] &= mean((x_k-E[x_k])^2).\end{align}
  So, during the training, $y_k \in R^{T\times N\times H \times W}$ is exactly the normalized pre-synaptic inputs received by neurons of $k$-th channel at the next layer during $T$ timesteps.

\begin{figure}[!htp]
  \centering
  \includegraphics[width=0.5\textwidth]{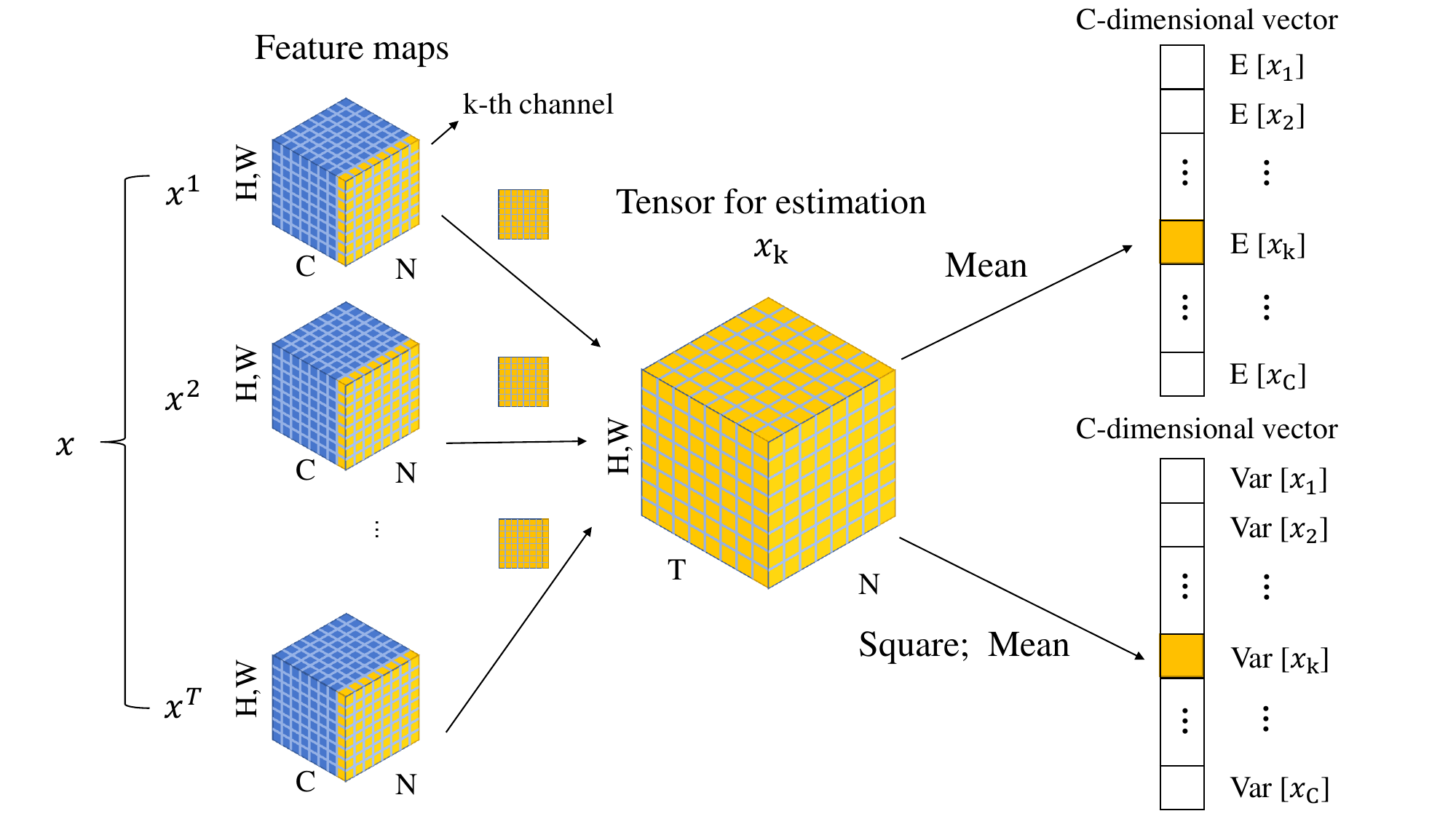}
  \caption{ \textbf{Estimation of $E[x]$ and $Var[x]$ in tdBN}. Each cube shows a feature map tensor at $t$ timestep, with $N$ as the batch axis, $C$ as the channel axis, $(H,W)$ as the spatial axis. Each element in C-dimension vector $E[x]$ and $Var[x]$ is estimated by the yellow tensor of corresponding channel.}
  \end{figure}

  In the inference, we follow the schema as standard batch normalization to estimate $\mu_{inf}$ and $\sigma^2_{inf}$ that represent the expectation of $E[x_k]$ and $Var[x_k]$ respectively over the whole datasets, which can be computed during the training process by moving average solution.

  Moreover, the batchnorm-scale-fusion is necessary to SNNs with tdBN in inference. It removes the batch normalization operations during the inference, thereby maintaining the network to be full-spiking and enabling it to be implemented on the neuromorphic platforms. Let $W_{c,k}$ and $B_{c,k}$ denote the convolution kernel and bias between the $c$-th feature map in a layer and the $k$-th feature map in the next layer. The schema is determined by

  \begin{align} W_{c,k}'&=\lambda_k\frac{\alpha V_{th}W_{c,k}}{\sqrt{\sigma^2_{inf,k}+\epsilon}},\\
   B_{c,k}'&=\lambda_k\frac{\alpha V_{th}(B_{c,k}-\mu_{inf,k})}{\sqrt{\sigma^2_{inf,k}+\epsilon}}+\beta_k,\end{align}
  where $W_{c,k}'$ and bias $B_{c,k}'$ denote the transformed weights after the batchnorm-scale-fusion. Thus, during the inference, spikes propagate layer by layer through transformed weights $W_{c,k}'$ and bias $B_{c,k}'$ without batchnorm operations. Therefore, our tdBN only affects the computation costs during training and doesn't influence the running mechanism of SNNs already trained.

  In short, our tdBN has two main differences from the standard BN. Firstly, unlike ANNs, SNNs propagate information not only layer by layer but also from last moment to the next. So, tdBN should normalize feature inputs on both temporal and spatial dimensions. Secondly, we make normalized variance dependent on threshold. In tdBN, pre-activations are normalized to $N(0,(\alpha V_{th})^2)$ instead of $N(0,1)$. And we will initialize the trainable parameters $\lambda$ and $\beta$ with 1 and 0. In the serial neural network, the hyper-parameter $\alpha$ is 1. For a local parallel network structure having $n$ branches, $\alpha$ will be $1 / \sqrt n$. It makes the pre-activations with mean of 0 and standard deviation of $V_{th}$ at the early training. The codes of the tdBN can be found in \textbf{Supplementary Material A}.

  \subsection{Overall training algorithm}
  In this section, we present the overall training algorithm of the STBP-tdBN for training deep SNNs from scratch with our tdBN.

  In the error backpropagation, we consider the last layer as the decoding layer, and the final output $Q$ can be determined by
  \begin{align} Q &= \frac{1}{T}\sum_{t=1}^T{Mo^{n,t}},\end{align}
  where $o^{n,t}$ is the spike fired by the last output layer, $M$ is the matrix of decoding layer and $T$ denotes the number of timesteps.

  Then we make the outputs pass through a softmax layer. The loss function is determined as the cross-entropy. Considering the output $Q = (q_1,q_2,\cdots,q_n)$ and label vector $Y=(y_1,y_2,\cdots,y_n)$, loss function $L$ is defined by
  \begin{align} p_i &= \frac{e^{q_i}}{\sum_{j=1}^{n}e^{q_j}}, \\
     L &= -\sum_{i=1}^{n}y_ilog(p_i). \end{align}
  With the iterative LIF model, STBP-tdBN method backpropagates the gradient of the loss $L$ on both spatial and temporal domains. By applying the chain rule, $\frac{\partial L}{\partial o_i^{t,n}}$ and $ \frac{\partial L}{\partial u_i^{t,n}}$ can be computed by
  \begin{align}\frac{\partial L}{\partial o_i^{t,n}} &= \sum_{j=1}^{l(n+1)}\frac{\partial L}{\partial u_j^{t,n+1}}\frac{\partial u_j^{t,n+1}}{\partial o_i^{t,n}}+\frac{\partial L}{\partial u_i^{t+1,n}}\frac{\partial u_i^{t+1,n}}{\partial o_i^{t,n}},\\
  \frac{\partial L}{\partial u_i^{t,n}} &= \frac{\partial L}{\partial o_i^{t,n}}\frac{\partial o_i^{t,n}}{\partial u_i^{t,n}}+\frac{\partial L}{\partial u_i^{t+1,n}}\frac{\partial u_i^{t+1,n}}{\partial u_i^{t,n}},\end{align}
  where $o^{t,n}$ and $u^{t,n}$ represent the spike and membrane potential of the neuron in layer $n$ at time $t$.
  Because of the non-differentiable spiking activities, $ \frac{\partial o^t}{\partial u^t}$ doesn't exist in fact. To solve this problem, \cite{wu2018spatio} proposes derivative curve to approximate the derivative of spiking activity. In this paper, we use the rectangular function, which is proved to be efficient in gradient descent and can be determined by
  \begin{equation}  \frac{\partial o^t}{\partial u^t}= \frac{1}{a}sign(|u^t-V_{th}|<\frac{a}{2}).\end{equation}
  The codes of the overall training algorithm are shown in \textbf{Supplementary Material A}.
  \section{Theoretical Analysis}
  In this section, we will analyze the effects of tdBN to SNNs trained by STBP-tdBN. With theoretical tools about gradient norm theory in ANNs, we find that our tdBN can alleviate the problem of gradient vanishing or explosion during the training process. We also explain the functions of scaling factors $\alpha$ and $V_{th}$ we added during the normalization.
  \subsection{Gradient norm theory}
  The gradient norm theory has been developed well in recent years, which aims to conquer the gradient vanishing or explosion in various neural networks structure. In this paper, we adopt the ``\textbf{Block Dynamical Isometry}" proposed by \cite{chen2020comprehensive} to analyze tdBN's effect in directly-trained SNNs.  It considers a network as a series of blocks
  \begin{equation} f(x) = f_{i,\theta_i}\circ f_{i-1,\theta_i-1} \circ \cdots \circ f_{1,\theta_1}(x),\end{equation}
  where the function $f_{j,\theta_j}$ represents the $j^{th}$ block and define its input-output jacobian matrix as $\frac{\partial f_j}{\partial f_{j-1}}=J_j$. Let $\phi(J)$ represent the expectation of $tr(J)$ and $\varphi(J)$ denote $\phi(J^2)-\phi^2(J)$. Then they proved the following lemmas.
  \newtheorem{lemma1}{Lemma}
  \begin{lemma1}
     Consider a neural network that can be represented as a series of blocks as Eq. (17) and the $j^{th}$ block's jacobian matrix is denoted as $J_j$. If   $\ \ {\forall}j, \ \ \phi(J_j{J_j}^T) \approx 1$ and $\varphi(J_j{J_j}^T) \approx 0$ , the network achieves ``\textbf{Block Dynamical Isometry}" and can avoid gradient vanishing or explosion. \cite{chen2020comprehensive}
  \end{lemma1}
  \newtheorem{lemma2}[lemma1]{Lemma}
  \begin{lemma2}
  Consider a block of neural network, which consists of data normalization with 0-mean, linear transform and rectifier activations (`` \textbf{General Linear Transform}"). Let $2^{nd}$ moment of input vector as $\alpha_{in}$ and the output vector as $\alpha_{out}$, we have $ \phi(J{J}^T) = \frac{\alpha_{out}}{\alpha_{in}}$. \cite{chen2020comprehensive}
  \end{lemma2}

  Based on the theoretical framework of gradient norm, we combine it with the unique properties of spiking neurons and further analyze the effectiveness of our proposed tdBN algorithm for SNNs.

  LIF model has two special hyper-parameters: $\tau_{decay}$ and $V_{th}$, where $\tau_{decay}$ influences the gradients propagation in temporal domain and $V_{th}$ effects the spatial dimension. For experiment with SNNs, the $\tau_{decay}$ are often set as small value (e.g. 0.25). To analyze the gradient transformation, we simplify the model and set $\tau_{decay}$ as zero and we can get the following proposition.
  \newtheorem{theorem}{Theorem}
  \begin{theorem}
  Consider an SNN with $T$ timesteps and the $j^{th}$ block's jacobian matrix at time $t$ is denoted as $J_j^t$. When $\tau_{decay}$ is equal to 0, if we fix the second moment of input vector and the output vector to $V_{th}^2$ for each block between two tdBN layers, we have $\phi(J_j^t(J_j^t)^T) \approx 1$ and the training of SNN can avoid gradient vanishing or explosion.
  \end{theorem}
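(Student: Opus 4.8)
The plan is to reduce the temporally-coupled SNN to a stack of single-timestep feedforward blocks and then apply the two lemmas almost directly. First I would exploit the hypothesis $\tau_{decay}=0$: substituting it into the iterative LIF update (Eq. 2) collapses the recurrence to $u^{t,n+1}=x^{t,n}$, and the temporal term $\frac{\partial L}{\partial u_i^{t+1,n}}\frac{\partial u_i^{t+1,n}}{\partial u_i^{t,n}}$ appearing in the backpropagation equations (14)--(15) vanishes because $\frac{\partial u_i^{t+1,n}}{\partial u_i^{t,n}}$ carries a factor $\tau_{decay}$. Consequently the gradient at time $t$ decouples from all other timesteps, $J_j^t$ becomes the Jacobian of a purely spatial block, and it suffices to establish $\phi\bigl(J_j^t(J_j^t)^T\bigr)\approx 1$ for one representative timestep.

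Next I would recast each block lying between two consecutive tdBN layers as an instance of the \emph{General Linear Transform} of Lemma 2. Reading the forward pass between normalizations---the tdBN (a $0$-mean data normalization), the spiking/firing nonlinearity (playing the role of the rectifier activation), and the convolution $W\circledast(\cdot)$ (the linear transform)---the block supplies exactly the three ingredients Lemma 2 requires. Applying the lemma then yields the closed form $\phi\bigl(J_j^t(J_j^t)^T\bigr)=\alpha_{out}/\alpha_{in}$, where $\alpha_{in}$ and $\alpha_{out}$ denote the second moments of the block's input and output.

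The third step invokes the defining property of tdBN. By construction tdBN normalizes the pre-activations to $N\bigl(0,(\alpha V_{th})^2\bigr)$, with $\alpha=1$ in the serial case and $\alpha=1/\sqrt{n}$ on an $n$-branch parallel structure so that the aggregated pre-activation again carries variance $V_{th}^2$. Hence both the input entering a block and the output feeding the next tdBN have second moment fixed at $V_{th}^2$, i.e. $\alpha_{in}=\alpha_{out}=V_{th}^2$, which is precisely the theorem's hypothesis. Substituting into the expression from Lemma 2 gives $\phi\bigl(J_j^t(J_j^t)^T\bigr)=V_{th}^2/V_{th}^2=1$ for every block and every timestep. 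With $\phi\approx 1$ established uniformly---and $\varphi\bigl(J_j^t(J_j^t)^T\bigr)\approx 0$ following because the normalization compresses the spread of the squared singular values---Lemma 1 then certifies Block Dynamical Isometry and the absence of gradient vanishing or explosion.

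I expect the main obstacle to be justifying that the binary firing function genuinely behaves as a rectifier activation in the sense Lemma 2 demands, since it is a non-differentiable step function whose derivative is defined only through the rectangular surrogate of Eq. (16). The delicate point is the self-consistency of the moment bookkeeping: the surrogate derivative is supported only on the window $|u^t-V_{th}|<a/2$, so one must verify---using the fact that the input is Gaussian with variance exactly $V_{th}^2$---that the expected squared Jacobian entry of the firing step combines with the convolution and the subsequent tdBN to \emph{preserve} the second moment at $V_{th}^2$ rather than merely assuming it. Pinning down this statistical estimate, which ties the chosen normalization variance to the firing-rate/activation fraction of the surrogate, is where the real work lies; the remaining algebra is a direct substitution into Lemmas 1 and 2.
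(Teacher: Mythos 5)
Your proposal follows essentially the same route as the paper's own proof: use $\tau_{decay}=0$ to decouple the timesteps, treat each block between consecutive tdBN layers as a General Linear Transform so that Lemma 2 gives $\phi\bigl(J_j^t(J_j^t)^T\bigr)=\alpha_{out}/\alpha_{in}=V_{th}^2/V_{th}^2=1$, and then invoke Lemma 1. The only substantive difference is that the paper does not stop at a single representative timestep but adds an aggregation step, writing $\Delta\theta_i=\sum_{t=1}^T\Delta\theta_i^t$ and assuming the per-timestep contributions are i.i.d.\ with near-zero mean so that $E[(\Delta\theta_i)^2]\approx\sum_{t=1}^T E[(\Delta\theta_i^t)^2]$, while the obstacle you flag (whether the surrogate-gradient firing nonlinearity genuinely qualifies as a rectifier in the sense of Lemma 2) is a point the paper's proof silently passes over rather than resolves.
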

  \begin{proof}
  The proof of \textbf{Theorem 1} is based on the \textbf{Lemma 1} and \textbf{Lemma 2}. The details are presented in \textbf{Supplementary Material B}.
  \end{proof}
  \subsubsection{Influence of membrane decay mechanism}

  We analyze the effect of $\tau_{decay}$ to the gradient propagation. From equations (2) and (15), we have
  \begin{equation}\frac{\partial L}{\partial u_i^{t,n}} = \frac{\partial L}{\partial o_i^{t,n}}\frac{\partial o_i^{t,n}}{\partial u_i^{t,n}}+\frac{\partial L}{\partial u_i^{t+1,n}}\tau_{decay}(1-o_i^{t,n}).\end{equation}
  That is to say, if a neuron fires a spike, $(1-o_i^{t,n})$ is equal to zero and the gradient is irrelevant to $\tau_{decay}$. On the other hand, for $\tau_{decay}$ is a tiny constant, the gradient of the neuron at time $t+1$ has little influence to that at time $t$.

  To verify the \textbf{Theorem 1} and our analysis about influence of membrane decay mechanism, we evaluate our tdBN in 20-layers plain spiking network on CIFAR-10. In Fig. 2 , we display the mean of gradient norms in each layer during the first 1/6 epoch of training process despite the first encoding layer and the last decoding layer. And we find that when $\tau_{decay}=0$, the curve of gradient norm behaves much steady, which perfectly supports our theory. It should be emphasized that $\tau_{decay}$ cannot be set as 0 because it will prevent the information from propagating along temporal dimension and cause serious degradation. So, we evaluate our method in the condition that $\tau_{decay} \neq 0$. For example, when $\tau_{decay}=0.25$ and $0.5$, the gradient norm increases very slowly as the network deepens, which will not influence the training process. The results strongly support our results that can avoid gradient vanishing or explosion in deep SNNs.
\begin{figure}[!htp]
  \centering
  \includegraphics[width=0.5\textwidth]{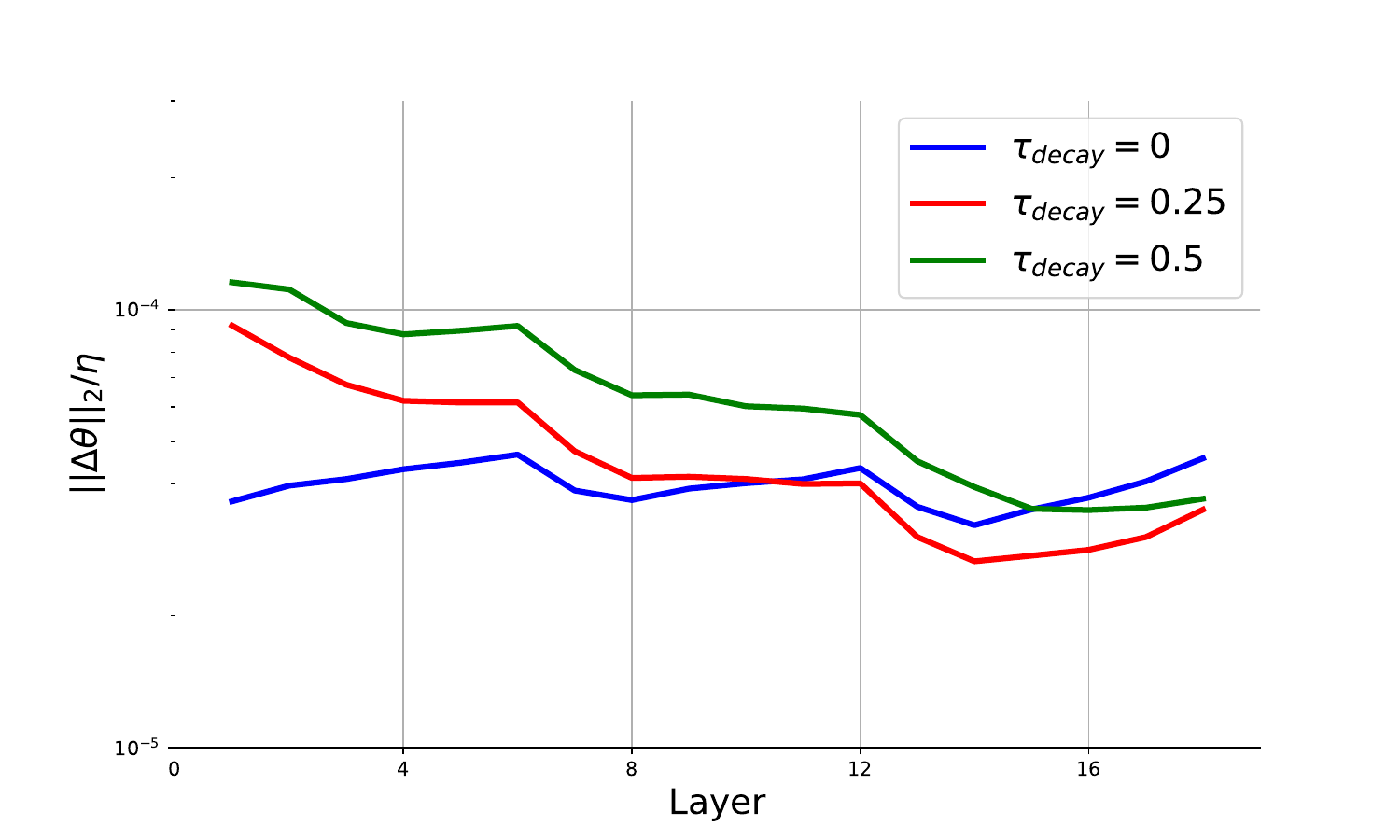}
  \caption{ \textbf{Gradient norm throughout the plain network with tdBN.}}
  \end{figure}

  \subsection{Scaling factors}  As we all know, a key for SNN model to obtain competitive performance is to set suitable threshold to maintain firing rates and reduce information loss. To achieve this, we introduce two scaling factors to the normalization implements in tdBN, which is meant to balance the pre-activations and threshold. In the early training,  with two scaling factors---$\alpha$ and $V_{th}$, we normalize the pre-activations to $N(0,{V_{th}^2})$ by initializing the trainable parameters $\lambda$ and $\beta$ with 1 and 0.

  First, we propose \textbf{Theorem 2} to explain the relations between per-activations and membrane potential of neurons, which helps to understand why our method works.
  \newtheorem{theorem1}[theorem]{Theorem}
  \begin{theorem1}
  With the iterative LIF model, assuming the pre-activations $x^t  \sim N(0,\sigma_{in}^2)$, we have the membrane potential $u^t \sim N(0,\sigma_{out}^2)$ and $\sigma_{out}^2 \propto \sigma_{in}^2$.
  \end{theorem1}
  \begin{proof}
  The proof of \textbf{Theorem 2} is presented in \textbf{Supplementary Material B}.
  \end{proof}
 \begin{figure}[!htp]
  \centering
  \includegraphics[width=0.5\textwidth]{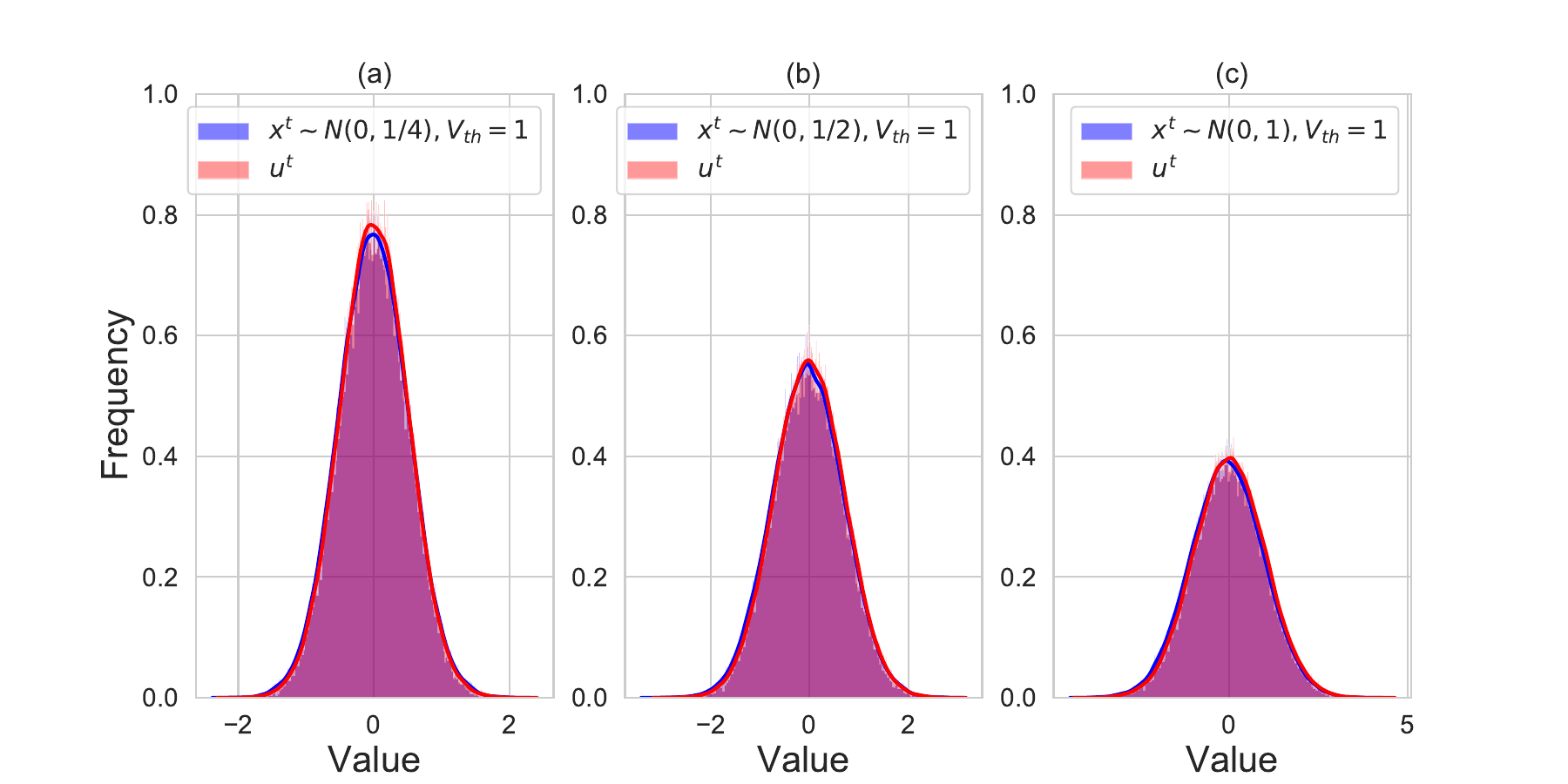}
  \caption{\textbf{Distributions of membrane potential $u^t$ with different variances of pre-activations $x^t$} }
  \end{figure}

  We verify the \textbf{Theorem 2} by visualized analysis. In the experiment, we set $\tau_{decay}=0.25$ and display the distribution of membrane potential with different pre-activations variance $\sigma_{in}^2$. The results are shown in Fig. 3. We find the high degree of similarity between the distributions of pre-activations and membrane potential, which supports the proposition.

  Next, we analyze the forward information propagation mechanism with the LIF model.
\begin{figure}[!htp]
  \centering
  \includegraphics[width=0.5\textwidth]{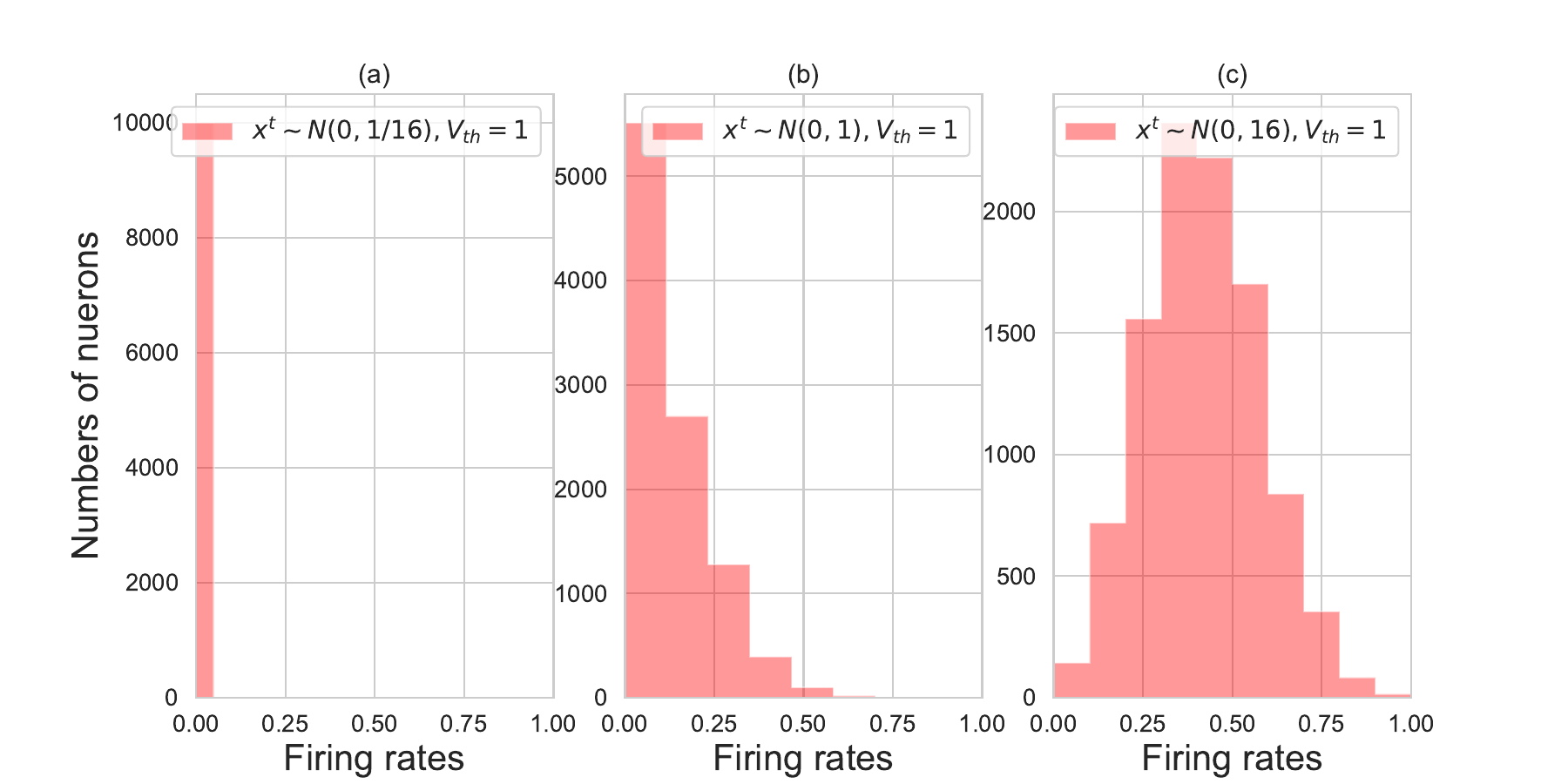}
  \caption{\textbf{Distributions of neurons' firing rates with different variances of pre-activations $x^t$.} }
  \end{figure}

  During the forward, when membrane potential reaches the threshold, neurons will fire a spike and make the information propagate layer by layer. With \textbf{Theorem 2} and Eq. (3) , we can approximate the possibilities of neurons firing a spike $P( u^t>V_{th} )$. It's obvious that it is a positive correlation between $P( u^t>V_{th} )$ and the variance of membrane potential $\sigma_{out}^2$ as well as $\sigma_{in}^2$. So, we adopt the scaling factors to adjust the distributions of pre-activations to maintain the firing rates in deep SNNs. Fig. 4 shows the distribution of neurons' firing rates when we set variances of pre-activations $x^t \sim N(0,\sigma_{in}^2)$ as different values. Because of the decay mechanisms, even a neuron receives positive input each time, it may fire no spike (Fig. 4(a)), which means neurons in next layer only receive few nonzero pre-synaptic inputs, making spikes disappear in deep SNNs and preventing signal from propagating forward. Another situation is that a neuron fires spikes all the time (Fig. 4(c)), which means the outputs of some neurons to be insensitive to the change of pre-activations and causes increase of computation.

  In conclusion, to balance pre-synaptic input and threshold to maintain the firing rates, we utilize the scaling factors to control the variance of membrane potential and pre-activations, which alleviates its dependence on the threshold. So, we normalize the pre-activations to $N(0,{V_{th}^2})$.

\section{Deep Spiking Residual Network}
ResNet is one of the most popular architectures o tackle with the problem of degradation when networks go deep. With the shortcut connections, \cite{he2016deep} adds identity mapping between layers, which enables the training of very deep neural networks. Inspired by the residual learning, we propose our deep spiking residual network, which replaces the BN layer with our tdBN and changes the shortcut connection to achieve better performance.

\subsubsection{Basic block}
ResNet in ANNs is built with some basic blocks. Fig. 5(a) shows a form of traditional basic blocks in ResNet-ANN. It is implemented in relatively shallow residual networks, which consists of two layers with $3\times3$ convolution kernel followed by a BN layer and the ReLU activation. On the basis, we propose the basic block of our deep spiking residual network. As shown in Fig. 5(b), we replace the ReLU activation with LIF model and replace the BN layer with our tdBN. Moreover, we change the shortcut connection and add a tdBN layer before the final addition. Hence, expect that the hyper-parameters $\alpha$ in tdBN layers before the final activation layer or in the shortcut are set as $1/\sqrt2$, the other tdBN layers' hyper-parameter $\alpha$ is defined as 1. It guarantees the input distribution of each activation will satisfy $N(0,V_{th}^2)$ at the beginning of training.
\begin{figure}[!htp]
\centering
\includegraphics[width=0.5\textwidth]{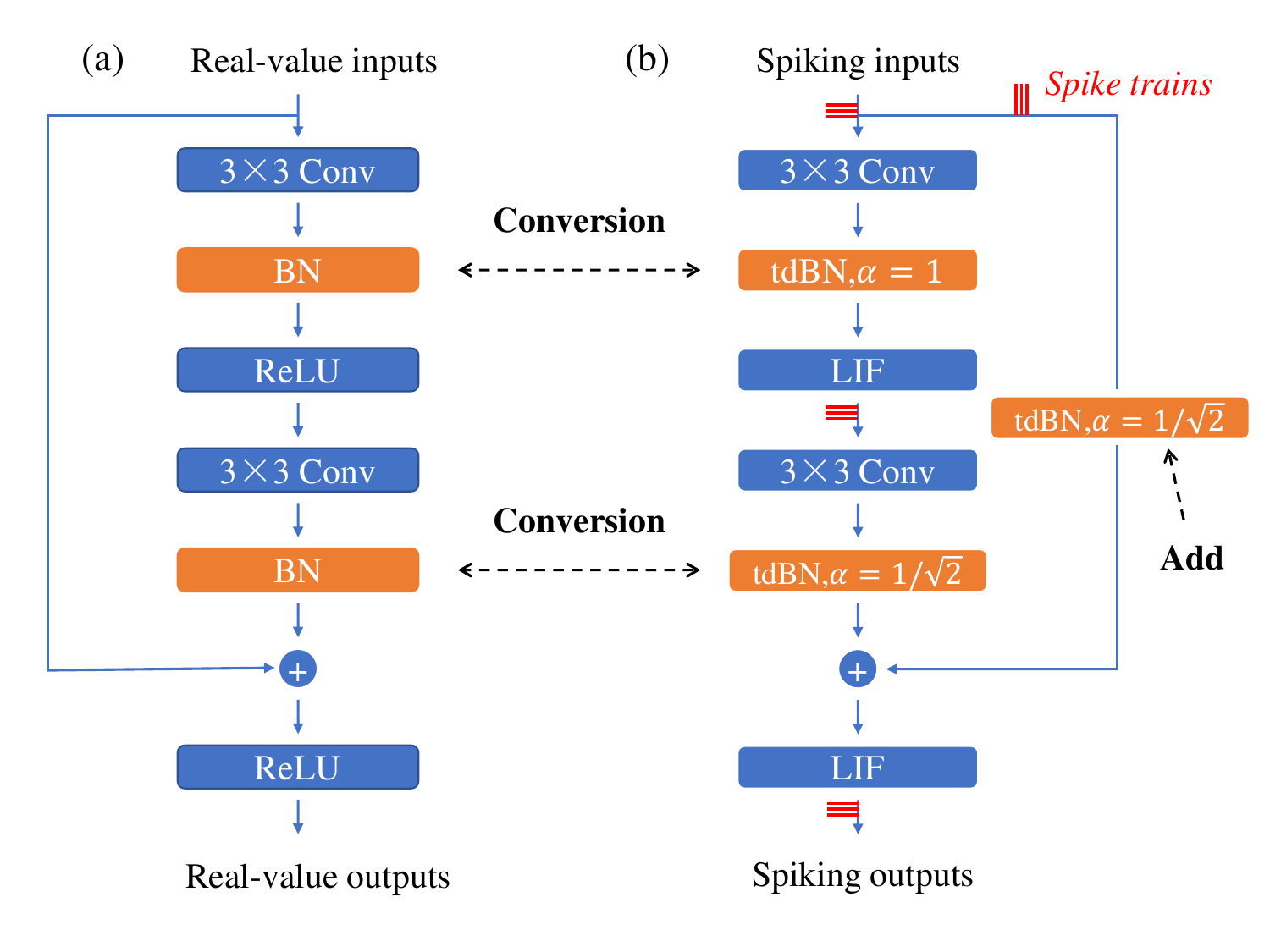}
\caption{\textbf{Different basic blocks between traditional ResNet-ANN (a) and our ResNet-SNN with tdBN (b).}}
\end{figure}

\subsubsection{Network architecture}
We build the deep spiking Residual network with our basic blocks. The first encoding layer receives the inputs and performs downsampling directly with convolution kernel and a stride of 2. Then, the spiking activities propagate through basic blocks. Similar to ResNet-ANN, we double the number of channels when the feature map is halved. After average pooling or full-connected layer if necessary, the last decoding layer is a fully-connected layer followed by softmax function.

\begin{table*}[!htbp]
\centering

\resizebox{\textwidth}{!}{%
\begin{tabular}{clclclclclc}
\hline
\hline
Dataset                                        &  & Model                               &  & Methods                    &  & Architecture               &  & TimeStep                    &  & Accuracy         \\ \hline
\multicolumn{1}{c|}{\multirow{9}{*}{CIFAR-10}} &  &\cite{sengupta2019going}             &  & ANN-SNN                    &  & VGG-16                     &  & 2500                        &  & 91.55\%          \\
\multicolumn{1}{c|}{}                          &  & \cite{hu2018spiking}                    &  & ANN-SNN                    &  & ResNet-44                  &  & 350                         &  & 92.37\%          \\
\multicolumn{1}{c|}{}                          &  & \cite{rathi2020enabling}                 &  & Hybird Training            &  & VGG-16                     &  & 200                         &  & 92.02\%          \\
\multicolumn{1}{c|}{}                          &  & \cite{lee2020enabling}                  &  & Spike-based BP             &  & ResNet-11                  &  & 100                         &  & 90.95\%          \\
\multicolumn{1}{c|}{}                          &  & \cite{wu2019direct}                    &  & STBP                       &  & 5 Conv, 2Fc                &  & 12                          &  & 90.53\%          \\ \cline{2-11}
\multicolumn{1}{c|}{}                          &  & \multirow{3}{*}{\textbf{our model}} &  & \multirow{3}{*}{STBP-tdBN} &  & \multirow{3}{*}{ResNet-19} &  & \textbf{6}                  &  & \textbf{93.16\%} \\
\multicolumn{1}{c|}{}                          &  &                                     &  &                            &  &                            &  & \textbf{4}                  &  & \textbf{92.92\%} \\
\multicolumn{1}{c|}{}                          &  &                                     &  &                            &  &                            &  & \textbf{2}                  &  & \textbf{92.34\%} \\ \hline
\multicolumn{1}{c|}{\multirow{9}{*}{ImageNet}} &  & \cite{sengupta2019going}            &  & ANN-SNN                    &  & VGG-16                     &  & 2500                        &  & 69.96\%          \\
\multicolumn{1}{c|}{}                          &  & \cite{sengupta2019going}             &  & ANN-SNN                    &  & ResNet-34                  &  & 2500                        &  & 65.47\%          \\
\multicolumn{1}{c|}{}                          &  & \cite{han2020rmp}     &  & ANN-SNN                    &  & ResNet-34                  &  & 1024                        &  & 66.61\%          \\
\multicolumn{1}{c|}{}                          &  & \cite{hu2018spiking}                   &  & ANN-SNN                    &  & ResNet-34                  &  & 768                         &  & 71.61\%          \\
\multicolumn{1}{c|}{}                          &  & \cite{rathi2020enabling}                &  & Hybird Training            &  & VGG-16                     &  & 250                         &  & 65.19\%          \\
\multicolumn{1}{c|}{}                          &  & \cite{rathi2020enabling}                &  & Hybird Training            &  & ResNet-34                  &  & 250                         &  & 61.48\%          \\ \cline{2-11}
\multicolumn{1}{c|}{}                          &  & \multirow{3}{*}{\textbf{our model}} &  & \multirow{3}{*}{STBP-tdBN} &  & ResNet-34                  &  & \textbf{6} &  & \textbf{63.72\%} \\
\multicolumn{1}{c|}{}                          &  &                                     &  &                            &  & ResNet-50                  &  &  \textbf{6}                           &  & \textbf{64.88\%} \\
\multicolumn{1}{c|}{}                          &  &                                     &  &                            &  & ResNet-34(large)           &  &   \textbf{6}                          &  & \textbf{67.05\%}\\
\hline
\hline

\end{tabular}%
}\caption{Comparisons with SNNs on CIFAR-10 \& ImageNet}
\end{table*}

\section{Experiment}
We test our deep residual SNNs on static non-spiking datasets (CIFAR-10, ImageNet) and neuromorphic datasets (DVS-gesture, DVS-CIFAR10). And we compare our results with SNN models to demonstrate the advantages of our method on both accuracy and the number of timesteps. The dataset introduction, pre-processing, parameter configuration, training details and results analysis are summarized in \textbf{Supplementary Material C}.

\subsection{Experiment on static  datasets}
Experiments on static  datasets include CIFAR-10 and ImageNet, which serve as standard image recognition benchmarks. We test our ResNet-SNN with different timesteps, sizes and depths. With very few timesteps, our models significantly reduce the amount of computation compared with ANNs with same architectures due to sparse spikes. The detailed analysis is presented in \textbf{Supplementary Material D}.

\subsubsection{CIFAR-10}

CIFAR-10 is an image dataset with 50000 training images and 10000 testing images with size of $32 \times 32$, which all belong to 10 classes. In this experiment, we use ResNet-19 with timesteps of 2, 4, 6.

Our results are shown in the Table 1. Before us, SNNs using the ANN-SNN converted method report best accuracy on CIFAR-10. However, to achieve good performance, converted SNNs usually require more than 100 timesteps.  In this work, our model achieves state-of-the-art performance (\textbf{93.15\% top-1 accuracy with only 6 timesteps}) on CIFAR-10, which not only greatly reduces latency and computation cost compared with other SNN models.


\subsubsection{ImageNet}
ImageNet \cite{deng2009imagenet} contains 1.28 million training images and 50000 validating images. On ImageNet, we test ResNet-34 with standard size and large size. The large model doubles the channels compared with the standard one and achieves \textbf{67.05\% top-1 accuracy with just 6 timesteps}. Also, we use ResNet-50 to explore the very deep directly-trained SNNs and achieve 64.88\% top-1 accuracy. Similar to ResNet-ANN, we observe the enhancement of accuracy between ResNet-34 and ResNet-50, which shows the deeper residual spiking networks may perform better in complex recognition tasks.

Before us, the directly-trained full-spiking SNNs have never reported competitive results on ImageNet while our methods make a breakthrough. Compared with early works, our methods achieve higher accuracies with fewer timesteps. The results are shown in Table 1.

\subsection{Experiment on neuromorphic datasets}
Compared with non-spiking static datasets, neuromorphic datasets contain more temporal information, which are more suitable for SNNs to demonstrate their advantages. Previous work \cite{wu2019direct} only tests the MLP-SNN with STBP on N-MNIST and report the state-of-art results (98.57\% accuracy). However, N-MNIST is too simple for our deep directly-trained SNNs with residual structures. So, we adopt two challenging neuromorphic datasets---DVS-Gesture and DVS-CIFAR10.

\subsubsection{DVS-Gesture}DVS-Gesture \cite{amir2017low} is a collection of moving gestures performed by 29 different individuals belonging to 11 classes, which is captured by DVS cameras under three lighting conditions.  We set timestep $T$ to be 40. In each timestep, the network receives only one slice of the event stream, which means the first $1200 ms$ of each action is used for training or testing. In this experiment, we use ResNet-17 and achieve an accuracy of 96.87\%, which is the state-of-the-art result for directly-trained SNNs on DVS-Gesture. We compare our results with other related works on DVS-Gesture, as shown in Table 2.
\begin{table}[H]
	\centering

\resizebox{0.45\textwidth}{!}{%
	\begin{tabular}{cccc}
        \hline
		\hline
		Model&Methods&Accuracy\\
		\hline
        \cite{he2020comparing}&STBP&93.40\%\\
        \cite{shrestha2018slayer}&SLAYER&93.64\%\\
        \cite{kugele2020efficient}&ANN&95.68\%\\
        \cite{amir2017low}&BPTT&94.59\%\\
        \hline
        \textbf{our model}&STBP-tdBN&\textbf{96.87\%}\\
        \hline
        \hline
	\end{tabular}}
\caption{Accuracy comparisons on DVS-Gesture}
\end{table}

\subsubsection{DVS-CIFAR10}
  DVS-CIFAR10 \cite{li2017cifar10} is a neuromorphic dataset converted from famous CIFAR-10 to its dynamic form. It consists of 1000 images in the format of spike train per class. With the noisy environment, it's also a challenging recognition task similar to DVS-Gesture. With ResNet-19, our methods achieve the best performance with 67.8\% accuracy in 10 timesteps. Table 3 compares our results with other models.
  \begin{table}[H]
	\centering

\resizebox{0.45\textwidth}{!}{%
	\begin{tabular}{cccc}
        \hline
		\hline
		Model&Methods&Accuracy\\
		\hline
        \cite{sironi2018hats}&HATS&52.40\%\\
        \cite{kugele2020efficient}&Streaming rollout ANN&66.75\%\\
        \cite{ramesh2019dart}&DART&65.78\%\\
        \cite{wu2019direct}&STBP&60.5\%\\
        \hline
        \textbf{our model}&STBP-tdBN&\textbf{67.8\%}\\
        \hline
        \hline
	\end{tabular}}
    \caption{Accuracy comparisons on DVS-CIFAR10}
\end{table}

\section{Conclusion}
In this paper, we present a normalization method enabling directly-trained deep SNNs with high performance. We combine the gradient norm theory and prove that this method can effectively balance the input stimulus and neuronal threshold during training, thereby facilitating the learning convergence.  On this basis, by further introducing the shortcut connection, we greatly extend directly-trained SNNs from a common shallow structure (less than ten layers) to a very deep structure (more than fifty layers). Finally, the model is evaluated on both large-scale static image datasets and neuromorphic datasets. Comparing with other SNN models, we achieve a high accuracy on CIFAR-10 and ImageNet with a significantly small inference latency. To our best knowledge, it is the first work to report a directly-trained and very deep SNNs on ImageNet. On neuromorphic datasets, our model can efficiently process temporal-spatial information and achieve state-of-the-art performance.

In summary, this work provides a feasible directly-trained scheme for deep SNNs. It maintains high efficiency of spike-based communication mechanism and enables SNNs to solve more complex large-scale classification tasks, which may benefit the implementations on the neuromorphic hardware and promote the practical applications of SNNs.
\section{Acknowledgment}
This work is partially supported by National Key R\&D Program of China (No.2018YFE0200200,2018AAA0102600), Beijing Academy of Artificial Intelligence (BAAI), and a grant from the Institute for Guo Qiang of Tsinghua university, and in part by the Science and Technology Major Project of Guangzhou (202007030006), and the key scientific technological innovation research project by Ministry of Education, and the open project of Zhejiang laboratory.
\bibliography{ref}
\newpage
\section{Supplementary Material}
\section{A\ \ \ \ Codes for Algorithms}
In this section, we present the detailed codes for (1) threshold-dependent batch normalization (\textbf{algorithm 1}) (2) overall training algorithm (\textbf{algorithm 2}).
\begin{algorithm}[h]
  \caption{Threshold-dependent BatchNorm}
  \textbf{In the Training:}\\
  \textbf{Input:} weighted outputs from last layer \\$x \in [Timestep,Batch\_size,C_{out},H,W].$\\
  \textbf{Output:} normalized  pre-synaptic inputs to the next layer\\
  $y \in [Timestep,Batch\_size,C_{out},H,W].$ \\
  \textbf{Parameters:} two trainable parameters $\lambda,\beta \in [C_{out}]$, threshold $V_{th}$, hyper-parameter $\alpha$.\\
  \textbf{Initialize:}$\lambda \leftarrow 1$, $\beta \leftarrow 0$.
  \begin{algorithmic}[1]
    \State $\mu_{tra} = mean(x[c_{out}:])$

    \State $\sigma^2_{tra} = mean(square(x[c_{out}:]-\mu_{tra}))$

    \State $\hat{x} = \frac{\alpha V_{th}(x-\mu_{tra})}{\sqrt{\sigma^2_{tra}+\epsilon}}$

    \State $y = \lambda\hat{x}+\beta$
  \end{algorithmic}

  \textbf{In the Inference:}\\
  \textbf{Input:} convolution kernel $W \in [C_{out},C_{in},H,W]$, bias $B \in [C_{out}]$, spiking outputs from last layer $o \in [Timestep,Batch\_size,C_{in},H,W].$\\
  \textbf{Output:} pre-synaptic inputs to the next layer  $y' \in [Timestep,Batch\_size,C_{out},H,W].$ \\
  \textbf{Parameters:} two trainable parameters from tdBN for training $\lambda,\beta \in [C_{out}]$, threshold $V_{th}$, hyper-parameter $\alpha$.
  \begin{algorithmic}[1]
    \State $\mu_{inf}\leftarrow E(\mu_{tra})$

    \State $\sigma^2_{inf}\leftarrow E(\sigma^2_{tra})$

    \State $W'=\lambda\frac{\alpha V_{th}W}{\sqrt{\sigma^2_{inf}+\epsilon}}$

    \State $B'=\lambda\frac{\alpha V_{th}(B-\mu_{inf})}{\sqrt{\sigma^2_{inf}+\epsilon}}+\beta$
    \For {$t = 1$ \textbf{to} $T$}
    \State ${y'[t]}={W}' \circledast o[t]+{B}'$
    \EndFor
  \end{algorithmic}
  \end{algorithm}
  \\
  \\

  \begin{algorithm}[]

  \caption{Overall Training Algorithm}

  \textbf{Input:} input ${X}$, label vector $Y$.\\
  \textbf{Output:} parameters in layer $i$ ${W_i}$,${B_i}$, prediction result $Q$.

   \begin{algorithmic}[1]
   \Function {$NeuronUpdate$}{$u,I$}
   \For {$t = 1$ \textbf{to} $T$}
    \State $u [ t ] = \tau_{decay}u[t-1]+I[t]$
   \EndFor
    \If {$u[t]<V_{th}$}
        \State $o[t] = 0$

    \Else
        \State $o[t] = 1$
        \State $u[t] = 0$

    \EndIf
    \State \Return{$o$}
   \EndFunction
      \end{algorithmic}
  \textbf{In the Training:}
  \begin{algorithmic}[1]
    \State \textbf{Forward:}
    \For {$t = 1$ \textbf{to} $T$}
      \State $x^{1}[t] = W_{1} \circledast X[t]+B_1$
    \EndFor
    \State $y^{1}\leftarrow tdBN\ for\ training(x^1)$
    \\$\verb|\\| \textbf{The\  training\  phase\ of\ Algorithm\ 1}$

    \State $o^{1}\leftarrow NeuronUpdate(u^{1},y^{1})$
    \For {$i = 2$ \textbf{to} $N$}
      \For {$t = 1$ \textbf{to} $T$}
        \State $x^{i}[t] = W_{i} \circledast o^{i-1}[t]+B_i$
      \EndFor
      \State $y^{i}\leftarrow tdBN\ for\ training(x^{i})$
      \State $o^{i}\leftarrow NeuronUpdate(u^{i},y^{i})$
    \EndFor
    \State $Q\leftarrow Decodinglayer(o^N)$
    \State $L\leftarrow ComputeLoss(Y,Q)$
    \State \textbf{Backward:}
    \State $\frac{\partial L}{\partial o^{i}}$,$ \frac{\partial L}{\partial u^{i}}\leftarrow Autograd$
  \end{algorithmic}
  \textbf{In the Inference:}
  \begin{algorithmic}[1]

      \State $x^{1}\leftarrow tdBN\ for\ inference(W_1,B_1,X)$
      \State $u^{1},o^{1}\leftarrow NeuronUpdate(u^{1},x^{1})$
      \For {$i = 2$ \textbf{to} $N$}

        \State $x^{i}\leftarrow tdBN\ for\ inference(W_i,B_i,x^{i-1})$
        \State $\verb|\\| \textbf{The\  inference\  phase\ of\ Algorithm\ 1}$

        \State $o^{i}\leftarrow NeuronUpdate(u^{i},x^{i})$
      \EndFor

    \State $Q\leftarrow Decodinglayer(o^N)$
  \end{algorithmic}
  \end{algorithm}
\section{B\ \ \ \ Proofs of Theorems}
\newtheorem{theorem4}{Theorem}
  \begin{theorem4}
  Consider an SNN with $T$ timesteps and the $j^{th}$ block's jacobian matrix at time $t$ is denoted as $J_j^t$. When $\tau_{decay}$ is equal to 0, if we fix the second moment of input vector and the output vector to $V_{th}^2$ for each block between two tdBN layers, we have $\phi(J_j^t(J_j^t)^T) \approx 1$ and the training of SNN can avoid gradient vanishing or explosion.
  \end{theorem4}
  \begin{proof}
  Consider an SNN with $T$ time step. When $\tau_{decay}$ is equal to 0, the membrane potential only depends on the input signal,so the gradient transforms independently in each timestep.

  At the timestep $t$, we consider the SNN as a series of blocks between two tdBN layers
  \begin{equation} f^t(x) = f^t_{i,\theta_i}\circ f^t_{i-1,\theta_i-1} \circ \cdots \circ f^t_{1,\theta_1}(x),\end{equation}
  where $f^t_{j,\theta_j}$ represents the function of $j^{th}$ block at timestep $t$ and define its input-output jacobian matrix as $\frac{\partial f^t_j}{\partial f^t_{j-1}}=J^t_j$.
  For each block, the second moment of input vector and the output vector are fixed to $V_{th}^2$, according to \textbf{Lemma 2}, we have
  \begin{equation} \phi(J_j^t(J_j^t)^T) = 1. \end{equation}
  So for each time step, according to \textbf{Lemma 1}, SNN with tdBN at timestep $t$ can achieve "\textbf{Block Dynamic Isometry}", which means $E[\|\Delta\theta_i^t\|_2^2]$ at any moment will not increase or diminish sharply when the SNN goes deep. And the $\Delta\theta_i$ can be defined by
  \begin{equation} \Delta\theta_i = \sum_{t=1}^T \Delta\theta_i^t.\end{equation}
  Under the condition that each $\Delta\theta_i^t$ at any timestep $t$ satisfies the $i.i.d$ assumption and  $E[\Delta\theta_i^t] \approx 0$, we have

  \begin{equation} E[(\Delta\theta_i)^2] \approx  \sum_{t=1}^T E[(\Delta\theta_i^t)^2]. \end{equation}
  So, the total $ E[\|\Delta\theta_i\|_2^2]$  will remain stable as well as $E[\|\Delta\theta_i^t\|_2^2]$, which means the SNN can avoid gradient vanishing or explosion.
  \end{proof}
  \newtheorem{theorem5}[theorem4]{Theorem}
  \begin{theorem5}
  With the iterative LIF model, assuming the pre-activation $x^t  \sim N(0,\sigma_{in}^2)$, we have the membrane potential $u^t \sim N(0,\sigma_{out}^2)$ and $\sigma_{out}^2 \propto \sigma_{in}^2$.
  \end{theorem5}
  \begin{proof}
   The iterative LIF model can be expressed by
   \begin{equation}   u^{t} = \tau_{decay}u^{t}(1-o^{t-1})+x^{t},\end{equation}\\
   \begin{equation}   o^{t} = \left\{ \begin{array}{ll} 1 & \textrm{if $u^{t}>V_{th}$}\\ 0 & \textrm{otherwise} \end{array} \right. ,\end{equation}
   where $u^{t}$ is the membrane potential of the neuron at timestep $t$, $o^{t}$ is the binary spike and $\tau_{decay}$ is the potential decay constant.
   So considering the membrane potential $u^t$ at the moment of $t$ and assuming its last firing time is time $t'$, we have
  \begin{equation} u^t = \sum_{p=t'}^{t} \tau_{decay}^{t-p}x^{p}, \end{equation}
  where $x^{p}$ denotes the pre-activation at $p$ moment and $\tau_{decay}$ represents the decay constant.
  Because $\tau_{decay}$ is a relative tiny constant(e.g. 0.25) in our SNN model, so
  \begin{equation} u^t \approx  \tau_{decay}x^{t-1}+x^t. \end{equation}
  Assume each $x^t$ is $i.i.d.$ sample from $N(0,\sigma_{in}^2)$, so the membrane potential $u^t \sim N(0,\sigma_{out}^2)$ and $\sigma_{out}^2 \propto \sigma_{in}^2$. With Fig. 3 in main paper, we find the high degree of similarity between the distributions of pre-activations and membrane potential, which supports the theorem.
  \end{proof}
\section{C\ \ \ \ Details of Experiments}
\subsection{Network architectures}
\begin{table*}[!htbp]
	\centering
	\caption{Network architectures of our ResNet-SNN}
    \label{table3}
    \begin{threeparttable}
	\begin{tabular}{c|c|c|c|c}
		\hline
		&17-layers&19-layers&34-layers(standard/large)&50-layers\\
		\hline
        \hline
        conv1&$3\times3,64,s1$&$3\times3,128,s1$&$7\times7,64/128,s2$&$7\times7,64,s2$\\
        \hline
        block1&$\left( \begin{array}{c}3\times3,64\\3\times3,64\end{array} \right)^*\times 3$&$\left( \begin{array}{c}3\times3,128\\3\times3,128\end{array}\right)\times3$&$\left( \begin{array}{c}3\times3,64/128\\3\times3,64/128\end{array} \right)^*\times 3$&$\left( \begin{array}{c}1\times1,64\\3\times3,64\\1\times1,256\end{array} \right)^*\times 3$\\
        \hline
        block2&$\left( \begin{array}{c}3\times3,128\\3\times3,128\end{array} \right)^*\times 4$&$\left( \begin{array}{c}3\times3,256\\3\times3,256\end{array} \right)^*\times 3$&$\left( \begin{array}{c}3\times3,128/256\\3\times3,128/256\end{array} \right)^*\times 4$&$\left( \begin{array}{c}1\times1,128\\3\times3,128\\1\times1,512\end{array} \right)^*\times 4$\\
        \hline
        block3&\diagbox[dir=SW,width=10em,height=4em]{}{}&$\left( \begin{array}{c}3\times3,512\\3\times3,512\end{array} \right)^*\times 2$&$\left( \begin{array}{c}3\times3,256/512\\3\times3,256/512\end{array} \right)^*\times 6$&$\left( \begin{array}{c}1\times1,256\\3\times3,256\\1\times1,1024\end{array} \right)^*\times 6$\\
        \hline
        block4&\diagbox[dir=SW,width=10em,height=4em]{}{}&\diagbox[dir=SW,width=10em,height=4em]{}{}&$\left( \begin{array}{c}3\times3,512/1024\\3\times3,512/1024\end{array} \right)^*\times 3$&$\left( \begin{array}{c}1\times1,512\\3\times3,512\\1\times1,2048\end{array} \right)^*\times 3$\\
        \hline
        &\multicolumn{2}{c}{average pool/2,256-d fc}&\multicolumn{2}{|c}{\diagbox[dir=SW,width=24em,height=1.1em]{}{}}\\
        \hline
        &11-d fc,softmax&10-d fc,softmax&\multicolumn{2}{c}{average pool,1000-d fc,softmax}\\

        \hline
	\end{tabular}
    \begin{tablenotes}
      \footnotesize
      \item * means the first basic block in the series perform downsampling directly with convolution kernels and a stride of 2.
    \end{tablenotes}
    \end{threeparttable}
\end{table*}
  In our experiment, we test ResNet-19 on CIFAR-10 and DVS-CIFAR10, ResNet-34 (standard/large) and ResNet-50 on ImageNet, ResNet-17 on DVS-Gessture. The detailed network architectures are shown in Table 4.
\subsection{Dataset introduction and pre-processing}
\subsubsection{CIFAR-10}
CIFAR-10 is an image dataset with 50000 training images and 10000 testing images with size of $32 \times 32$, which all belong to 10 classes. We randomly flip and crop every image. And every image is normalized by subtracting the global mean value of pixel intensity and devided by the standard variance along RGB channels during the data pre-processing.
\subsubsection{ImageNet}
ImageNet contains 1.28 million training images and 50000 validating images. Our data pre-processing uses the usual practice, which randomly crops and flips the $224 \times 224$ image with general normalization method as what we use in CIFAR-10.
\subsubsection{DVS-Gesture}
DVS-Gesture is a collection of moving gestures performed by 29 different individuals, which is captured by DVS cameras under three lighting conditions. It contains 1342 records from 23 subjects in training set and 288 examples from other 6 subjects in testing data. Belonging to 11 classes, each instance in DVS-Gesture is a stream of events with size of $128 \times 128$. In this experiment, we downsize the event steam to $32 \times 32$ and sample a slice every $30 ms$ during both training and testing process.  We set timestep $T$ to be 40. In each time step, the network receives only one slice, which means the first $1200 ms$ of each action is used for training or testing.
\subsubsection{DVS-CIFAR10}
 DVS-CIFAR10 is a neuromorphic dataset converted from famous CIFAR-10 to its dynamic form. It consists of 1000 images in the format of spike train per class with total 10 classes. In our experiment, the dataset is split into a training set with 9000 images and testing set with 1000 images. We downsample the original $128 \times 128$ image size to $42 \times 42$ reduce the temporal resolution by accumulating the spike train within every $5 ms$.
\subsection{Training settings}
\subsubsection{Optimizer}
For all of our experiments, we use the stochastic gradient descent (SGD) optimizer with initial learning rate $r = 0.1$ and momentum $0.9$. For ResNet-19 and ResNet-34 (standard/large), we let $r$ decay to $0.1r$ every 35 epoches. For ResNet-50, we let $r$ decay to $0.1r$ every 45 epoches. For ResNet-17, we let $r$ decay to $0.1r$ every 1000 epoches.
\subsubsection{Acceleration methods}
All of the models are programmed on Pytorch. And to accelerate the training process and reduce memory cost, we adopt the mixed precision training. Some models should be trained on multi-GPU, so we use the sync-BN technique to reduce the influence of small batch size. The batch sizes of every experiment are shown in Table 5.
\begin{table}[!htbp]
	\centering
	\caption{Batch size in experiments}

	\begin{tabular}{cccc}
		\toprule
		Network Architecture&Batch size per GPU&GPU \\
		\midrule
         ResNet-17&40&1\\
		 ResNet-19&36&1\\
         ResNet-34(standard)&24&8\\
         ResNet-34(large)&10&8\\
         ResNet-50&10&8\\
		\bottomrule
	\end{tabular}
\end{table}

\section{D\ \ \ \ Analysis of Computation Reduction}
\subsection {Computing patterns of SNNs and ANNs}
As we all know, SNN cannot beat ANN on accuracy comparisons of most recognition tasks. However, compared with ANNs, SNNs are much more energy-efficient due to their binary spikes and event-driven neural operations on the specialized hardware. For ANNs, the activations are determined by multiply-accumulate (MAC) operations. For SNNs, the compute cost is mainly the accumulate (AC) operations, which has two main differences: (1) In the fully-spiking SNN, there is no multiplication operation because of binary spike inputs. With Eq. (5) and (6), the pre-activations $x^t =\sum_{j}w_jo^{t}(j)$ and $o^{t}(j)=0\ or\ 1$, which means $x^t$ is an accumulation of weights $w_j$ in fact. So most compute operations we need in SNNs are additions not the multiplications. (2) The event-driven operation means that for each neuron the synaptic computation only occurs when required. If there is no spike received, there is no need to compute. With these characteristics, despite that SNNs need to be evaluated over $T$ timesteps, their computation costs are still lower than ANNs because the AC operations cost less than MAC operations and the sparse spikes decrease SNNs' computing operations a lot.
\subsection {Computation efficiency of SNNs}

In Fig. 6, we analyze the average numbers of spikes per neuron in each layer of ResNet-19 on CIFAR-10 and ResNet-34(large) on ImageNet. With much fewer timesteps, the spikes are surprisingly sparse. As the network goes deeper, the firing rates also decrease slowly. With these data, we can estimate how many AC operations our models need during a single inference process (ignore the computation of activation functions in ANNs and membrane potential update in SNNs). The results are shown in Table 6.
\begin{table}[!htp]
	\centering
	\caption{Computation comparisons between SNNs and ANNs}
	\begin{tabular}{c|c|c|c}
		\hline
        \hline
		&Models&Additions&Multiplications \\
		\hline
        \multirow{2}*{SNN}&ResNet-19&$1.8\times10^9$&$3.4\times10^7$\\
        \cline{2-4}
        &ResNet-34(large)&$1.2\times10^{10}$&$1.2\times10^9$\\
        \hline
        \multirow{2}*{ANN}&ResNet-19&$2.2\times10^9$&$2.2\times10^9$\\
        \cline{2-4}
        &ResNet-34(large)&$1.4\times10^{10}$&$1.4\times10^{10}$\\
        \hline
        \hline
	\end{tabular}
\end{table}

It should be pointed out that our SNN models need multiplications because the first encoding layer converts real images to spikes and the last decoding layer uses spiking signals to predict output classes. The comparison shows that sparse spikes in our SNN models lead to significant computation reduction and energy efficiency. Moreover, a single feed-forward pass in SNNs implemented in a neuromorphic architecture might be faster than an ANN implementation because of event-driven operations. So, with our much fewer timesteps, our SNNs can also show advantages on running time over ANNs of same architectures.

\begin{figure}[!htp]
\centering
\subfigure[Spiking activities in each layer of ResNet-19]{
\includegraphics[width=0.5\textwidth]{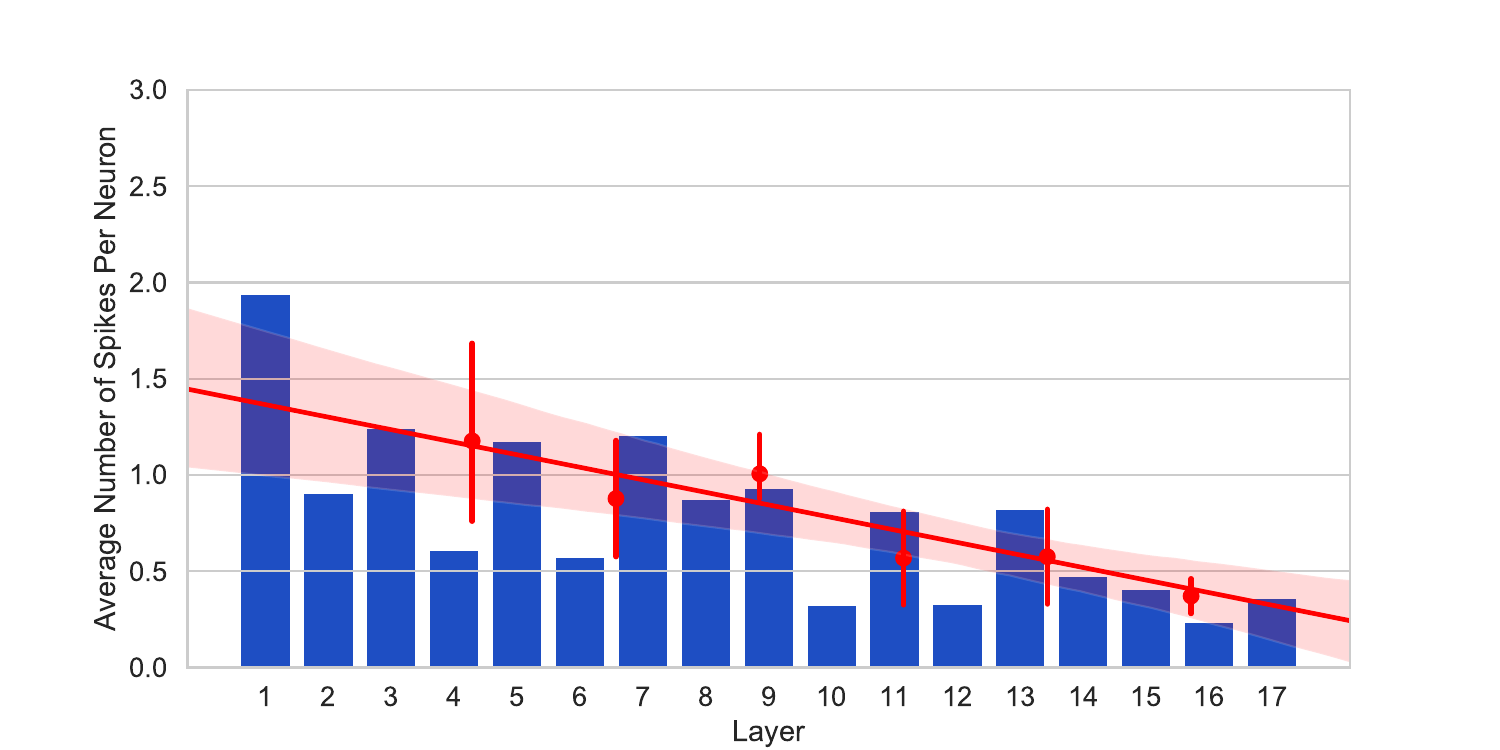}
}
\subfigure[Spiking activities in each layer of ResNet-34(large)]{
\includegraphics[width=0.5\textwidth]{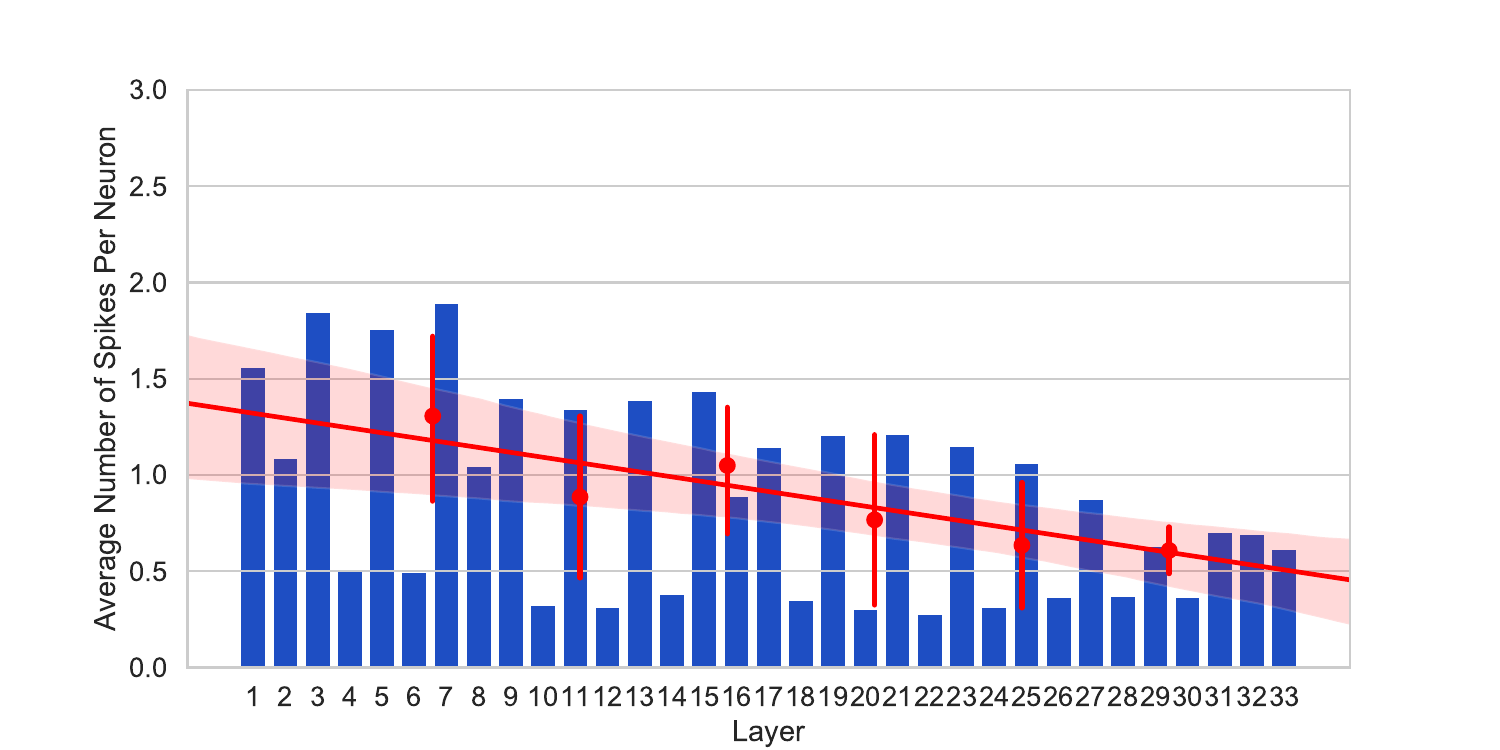}
}
\caption{\textbf{Spiking activities analysis of ResNet-19 and ResNet-34(large).}}
\end{figure}

 \end{document}